\newcommand{\comment}[1]{}
\newcommand{\h}[1]{\widehat{#1}}
\newcommand{\mcal}[1]{\mathcal{#1}}
\newtheorem{lemma}{Lemma}
\newtheorem{assump}{Assumption}
\newtheorem{theorem}{Theorem}
\newtheorem{corollary}{Corollary}
\begin{document}

\title{Efficient Sparse Group Feature Selection via \\ Nonconvex Optimization}
\author{Shuo Xiang$^{1,2}$  \qquad Xiaotong Shen$^3$  \qquad Jieping Ye$^{1,2}$ \\
  \normalsize $^1$Department of Computer Science and Engineering, Arizona State University, AZ 85287\\
  \normalsize $^2$Center for Evolutionary Medicine and Informatics, The Biodesign Institute, Arizona State University, AZ 85287\\
  \normalsize $^3$School of Statistics, University of Minnesota, Minneapolis, MN
  55455 }
\maketitle

%
%

\begin{abstract}
  Sparse feature selection has been demonstrated to be effective in handling
  high-dimensional data.  While promising, most of the existing works use convex
  methods, which may be suboptimal in terms of the accuracy of feature selection
  and parameter estimation. In this paper, we expand a nonconvex paradigm to
  sparse group feature selection, which is motivated by applications that
  require identifying the underlying group structure and performing feature
  selection simultaneously. The main contributions of this article are twofold:
  (1) statistically, we introduce a nonconvex sparse group feature selection
  model which can reconstruct the oracle estimator. Therefore, consistent
  feature selection and parameter estimation can be achieved; (2)
  computationally, we propose an efficient algorithm that is applicable to
  large-scale problems. Numerical results suggest that the proposed nonconvex
  method compares favorably against its competitors on synthetic data and
  real-world applications, thus achieving desired goal of delivering high
  performance.
\end{abstract}

\section{Introduction}\label{sec:introduction}

During the past decade, sparse feature selection has been extensively
investigated, on both optimization algorithms~\cite{bach2010convex} and
statistical properties~\cite{zhao2006model, tibshirani1996regression,
  bickel2009simultaneous}.  When the data possesses certain group structure,
sparse modeling has been explored in~\cite{yuan2006model, meier2008group,
  huang2010benefit} for group feature selection. The group
lasso~\cite{yuan2006model} proposes an $L_2$-regularization method for each
group, which ultimately yields a group-wisely sparse model. The utility of such
a method has been demonstrated in detecting splice
sites~\cite{yang2010online}---an important step in gene finding and
theoretically justified in~\cite{huang2010benefit}. The sparse group lasso
~\cite{friedman2010note} enables to encourage sparsity at the level of both
features and groups simultaneously.

In the literature, most approaches use convex methods due to globality of the
solution and tractable computation. However, this may lead to suboptimal
results. Recent studies demonstrate that nonconvex methods, for instance, the
truncated $L_1$-penalty~\cite{shen2012likelihood, mazumder2011sparsenet,
  zhang2011multi}, may have potential to deliver superior performance than the
standard $L_1$-formulation. In addition,~\cite{shen2012likelihood} suggests that
a constrained version of nonconvex regularization is slightly more preferable
than its regularization counterpart due to theoretical merits.

In this article, we investigate the sparse group feature selection (SGFS)
through a constrained nonconvex formulation. Ideally, we wish to optimize the
following $L_0$-model:
\begin{equation}
  \label{eq:csglp:nonconvex}
  \begin{aligned}
 &\underset{\bm x}{\text{minimize}} && \frac{1}{2}\|\bm A \bm x - \bm y\|^2_2 \\
   &\text{subject to} && \sum_{j=1}^p I(|x_j| \neq 0)  \le s_1 \\
   & && \sum_{j=1}^{|G|}I(\|\bm x_{G_j}\|_2 \neq 0) \le s_2,
  \end{aligned}
\end{equation}
where $\bm A$ is an $n$ by $p$ data matrix with its columns representing
different features. $\bm x=(x_1,\cdots,x_p)$ is partitioned into $|G|$ non-overlapping groups $ \{ \bm x_{G_i} \}$
and $I(\cdot)$ is the indicator function.
The advantage of the $L_0$-model~\eqref{eq:csglp:nonconvex} lies in its complete
control on two levels of sparsity $(s_1,s_2)$, which are the numbers of features
and groups respectively.  However, a problem
like~\eqref{eq:csglp:nonconvex} is known to be
NP-hard~\cite{natarajan1995sparse}.

This paper develops an efficient nonconvex method, which is a computational
surrogate of the $L_0$-method described above and has theoretically guaranteed
performance. We contribute in two aspects: (i) statistically, the proposed
method retains the merits of the $L_0$ approach~\eqref{eq:csglp:nonconvex} in
the sense that the oracle estimator can be reconstructed, which leads to
consistent feature selection and parameter estimation; (ii) computationally, our
efficient optimization tool enables to treat large-scale problems.



\section{Nonconvex Formulation and Computation}\label{sec:formulation}

 One major difficulty of solving~\eqref{eq:csglp:nonconvex} comes from
nonconvex and discrete constraints, which require enumerating all
possible combinations of features and groups to achieve the optimal
solution. Therefore we approximate these constraints by their
continuous computational surrogates:
\begin{equation}
  \label{eq:csglp1}
  \begin{aligned}
 &\underset{\bm x}{\text{minimize}} && \frac{1}{2}\|\bm A \bm x - \bm y\|^2_2 \\
   &\text{subject to} && \sum_{j=1}^p J_{\tau}(|x_j|) \le s_1, \\
   & &&              \sum_{i=1}^{|G|} J_{\tau}(\|\bm x_{G_i}\|_2) \le s_2,
\end{aligned}
\end{equation}
where $J_{\tau}(z)=\min(|z|/\tau,1)$ is a truncated $L_1$-function
approximating the $L_0$-function~\cite{shen2012likelihood, zhang2010analysis},
and $\tau>0$ is a tuning parameter such that $J_{\tau}(z)$ approximates
the indicator function $I(|z| \neq 0)$ as $\tau$ approaches zero.

  To solve the nonconvex problem~\eqref{eq:csglp1}, we develop
a Difference of Convex (DC) algorithm based on a decomposition
of each nonconvex constraint function into a difference of two
convex functions; for instance,
\begin{equation*}
\sum_{j=1}^p J_{\tau}(|x_j|)=
S_1(\bm x)-S_2(\bm x),
\end{equation*}
where
\[
S_1(\bm x)= \frac{1}{\tau} \sum_{j=1}^p |x_j|
\]
and
\[
S_2(\bm
x)=\frac{1}{\tau} \sum_{j=1}^p \max \{|x_j|- \tau, 0 \}
\]
are convex in $\bm x$. Then each trailing convex function, say
$S_2(\bm x)$, is replaced by its affine
minorant at the previous iteration
\begin{equation}
  \label{eq:affine:minorant}
  S_1(\bm x) - S_2(\hat{\bm x}^{(m-1)}) - \nabla S_2(\hat{\bm x}^{(m-1)})
^T(\bm x - \hat{\bm x}^{(m-1)}),
\end{equation}
which yields an upper approximation of the constraint function
$\sum_{j=1}^p J_{\tau}(|x_j|)$ as follows:
\begin{equation}\label{eq:constraint1}
\frac{1}{\tau} \sum_{j=1}^p |x_j|\cdot I(|\hat{x}^{(m-1)}_j|
\leq \tau) + \sum_{j=1}^p I(|\hat{x}^{(m-1)}_j| > \tau) \le s_1.
\end{equation}
Similarly, the second nonconvex constraint in~\eqref{eq:csglp1} can be
approximated by
\begin{equation}\label{eq:constraint2}
   \frac{1}{\tau} \sum_{j=1}^{|G|} \|\bm x_{G_j}\|_2\cdot I(\|\hat{\bm x}^{(m-1)}_{G_j}\|_2 \leq \tau) + \sum_{j=1}^{|G|}I(\|\hat{\bm x}^{(m-1)}_{G_j}\|_2 > \tau) \le s_2.
\end{equation}
Note that both~\eqref{eq:constraint1} and~\eqref{eq:constraint2} are convex constraints, which result in a convex subproblem as follows:
\begin{equation}\label{eq:lm-prime}
 \begin{aligned}
 &\underset{\bm x}{\text{minimize}} && \frac{1}{2}\|\bm A \bm x - \bm y\|^2_2 \\
 &\text{subject to} && \frac{1}{\tau} \|\bm x^{T_1(\hat{\bm x}^{(m-1)})}\|_1 \leq s_1- (p-|T_1(\hat{\bm x}^{(m-1)})|)  \\
 & && \frac{1}{\tau} \|\bm x^{T_3(\hat{\bm x}^{(m-1)})}\|_G \leq s_2 - (|G|-|T_2(\hat{\bm x}^{(m-1)})|), \\
 \end{aligned}
\end{equation}
where $T_1$, $T_2$ and $T_3$ are the support sets\footnote{Support
  sets indicate that the elements outside these sets have no effect on the particular
  items in the constraints of~\eqref{eq:lm-prime}.} defined as:
\begin{equation*}
  \begin{aligned}
  T_1(\bm x)&=\{i:|x_i|\le \tau \}\\
  T_2(\bm x)&=\{i:\|x_{G_i}\|_2 \le \tau \} \\
  T_3(\bm x)&=\{i: x_i\in \bm x_{G_j}, j\in T_2(\bm x)\},
  \end{aligned}
\end{equation*}
$\|\bm x^{T_1}\|_1$ and $\|\bm x^{T_3}\|_G$ denote the corresponding value
restricted on $T_1$ and $T_3$ respectively, and $\|\bm x\|_G =
\sum_{i=1}^{|G|}\|\bm x_{G_i}\|_2$. Solving~\eqref{eq:lm-prime} would provide us
an updated solution, denoted as $\hat{\bm x}^{(m)}$. Such procedure is iterated
until the objective value is no longer decreasing, indicating that a local
minimizer is achieved. The DC algorithm is summarized in Algorithm~\ref{alg:DC},
from which we can see that efficient computation of~\eqref{eq:lm-prime} is
critical to the overall DC routine. We defer detailed discussion of this part to
Section~\ref{sec:opt}.

\begin{algorithm}[thb]
    \caption{DC programming for solving~\eqref{eq:csglp1}}
    \label{alg:DC}
 \begin{algorithmic}[1]
    \REQUIRE $\bm A$, $\bm y$, $s_1$, $s_2$
    \ENSURE \texttt{solution} $\bm x$ to~\eqref{eq:csglp1}
    \STATE (\textbf{Initialization}) \texttt{Initialize} $\hat{\bm x}^{(0)}$.
\STATE  (\textbf{Iteration}) \texttt{At iteration} $m$, \texttt{compute} $\hat{\bm x}^{(m)}$ \texttt{by optimizing~\eqref{eq:lm-prime}}.
\STATE  (\textbf{Stopping Criterion}) \texttt{Terminate
when the objective function stops decreasing.}
 \end{algorithmic}
\end{algorithm}

\section{Theoretical Results}\label{sec:theory}
This section investigates theoretical aspects of the proposed method. More
specifically, we demonstrate that the oracle estimator $\hat{\bm x}^{o}$, the
least squares estimator based on the true model, can be reconstructed.  As a
result, consistent selection as well as optimal parameter estimation can be
achieved.

For better presentation, we introduce some notations that would be only utilized
in this section. Let $C=(G_{i_1},\cdots,G_{i_k})$ be the collection of groups
that contain nonzero elements. Let $A_{G_j} = A_{G_j}(\bm x)$ and $A=A(\bm x)$
denote the indices of nonzero elements of $\bm x$ in group $G_j$ and in entire
$\bm x$ respectively. Define
\[
\mathcal S_{j,i}=\{\bm x \in
\mathcal S: (A_C, C) \neq (A_{C^0},C^0), |A|=j, |C|=i\},
\]
where $\mcal{S}$ is the feasible region of~\eqref{eq:csglp1} and $C^0$
represents the true nonzero groups.

The following assumptions are needed for obtaining consistent reconstruction of
the oracle estimator:

\begin{assump}[Separation condition] \label{assump:separation}
Define
\[
C_{\min}(\bm x^0) = \inf_{\bm x\in \mathcal S} \frac{- \log (1- h^2(\bm x,\bm x^0))}{\max(|C^0 \setminus C|,1)},
\]
then for some constant $c_1>0$,
\begin{equation*}
C_{\min}(\bm x^0 )  \geq  c_1\frac{ \log |G|+\log s_1^0}{n},
\end{equation*}
where  \[
h(\bm x,\bm x^0)=\big(\frac{1}{2}\int (g^{1/2}(\bm x,y)-g^{1/2}(\bm
x^0,y))^2
d \mu(y) \big)^{1/2}
\] is the Hellinger-distance for densities with respect to a dominating measure $\mu$.
\end{assump}


\begin{assump}[Complexity of the parameter space]\label{assump:A}
For some constants
$c_0>0$ and any $0<t< \varepsilon \leq 1$,
\[
H(t, {\cal F}_{j,i}) \leq c_0
\max((\log (|G| + s_1^0))^2,1)|{\mathcal B}_{j,i}|
\log (2 \varepsilon/t),
\]
where ${\cal B}_{j,i}={\cal S}_{j,i} \cap \{\bm x \in h(\bm x,\bm x^0) \leq 2
\varepsilon\}$ is a local parameter space and ${\cal F}_{j,i}=\{g^{1/2}(\bm
x,y): \bm x \in {\cal B}_{j,i} \}$ is a collection of square-root
densities. $H(\cdot,{\cal F})$ is the bracketing Hellinger metric entropy of
space $\cal F$~\cite{kolmogorov1961entropy}.
\end{assump}

\begin{assump}\label{assump:B}
For some positive constants $d_1, d_2, d_3$ with $d_1>10$,
\[
-\log (1 - h^2(\bm x,\bm x^0) )\geq  - d_1\log (1- h^2(\bm x^{\tau},
\bm x^0) )- d_3  \tau^{d_2} p,
\]
where $\bm x^{\tau} = (x_1 I(|x_1| \geq \tau),\cdots, x_p
I(|x_p| \geq \tau))$.
\end{assump}

With the above assumptions hold, we can conclude the following non-asymptotic
probability error bound regarding the reconstruction of the oracle estimator
$\hat{\bm x}^o$.


\begin{theorem} \label{thm:main:theory} Suppose that Assumptions~\ref{assump:A}
  and~\ref{assump:B} hold. For a global minimizer of~\eqref{eq:csglp1} $\hat{\bm
    x}$ with $(s_1,s_2)=(s^0_1,s^0_2)$ and $\tau \leq \big(\frac{(d_1 -
    10)C_{\min} (\bm x^0) }{d_3d} \big)^{1/d_2},$ the following result hold:
\[
  \mathbb P\Big(\hat{\bm x} \neq \hat{\bm x}^{o}\Big) \leq \exp \Big(- c_2 n C_{\min}(\bm x^0) + 2(\log |G| + \log s_1^0)\Big).
\]
Moreover, with Assumption~\ref{assump:separation} hold, $\mathbb P\Big(\hat{\bm x}= \hat{\bm x}^{o}\Big) \rightarrow 1$ and
\[
  E h^2(\hat{\bm x},\bm x^o) =(1+o(1)) \max(E h^2(\bm{\hat x}^o,{\bm x}^0),
\frac{s_1^0}{n})
\]
as $n \rightarrow \infty$, $|G| \rightarrow \infty$.
\end{theorem}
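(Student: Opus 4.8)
The plan is to establish the result through a large-deviation analysis of the likelihood ratio, decomposed over the possible ``wrong'' support configurations and controlled uniformly by the metric entropy of Assumption~\ref{assump:A}. First I would reduce the reconstruction event to a comparison of the least-squares objective, which in the Gaussian model is the negative log-likelihood. The event $\{\hat{\bm x} \neq \hat{\bm x}^o\}$ forces the existence of a feasible $\bm x \in \mathcal{S}$ with $(A_C,C) \neq (A_{C^0},C^0)$ whose likelihood is at least as large as that at the oracle. Partitioning $\mathcal{S}$ by the sets $\mathcal{S}_{j,i}$ according to the cardinalities $|A|=j$ and $|C|=i$, I would bound $\mathbb{P}(\hat{\bm x} \neq \hat{\bm x}^o)$ by a sum over $(j,i)$ of the probabilities of the corresponding local events.

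For a fixed configuration, reconstruction can fail only if some $\bm x$ in the local shell $\mathcal{B}_{j,i}$ attains likelihood no smaller than the oracle's. Bounding the expectation of the likelihood ratio by the Hellinger affinity $1-h^2(\bm x,\bm x^0)$ and applying a Markov-type step produces a per-point tail of order $\exp\bigl(n\log(1-h^2(\bm x,\bm x^0))\bigr)=\exp\bigl(-n(-\log(1-h^2(\bm x,\bm x^0)))\bigr)$, whose exponent is precisely the numerator appearing in $C_{\min}(\bm x^0)$. To make this uniform over $\mathcal{B}_{j,i}$ I would run a bracketing-entropy chaining/peeling argument in the style of likelihood large-deviation theory; Assumption~\ref{assump:A} supplies exactly the entropy bound that keeps the chaining integral finite, and the normalization by $\max(|C^0\setminus C|,1)$ in $C_{\min}$ matches the way the missed true groups enter the per-configuration count.

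The surrogate $J_\tau$ enters next. Since the large-deviation control is most naturally stated for the thresholded vector $\bm x^\tau$, I would invoke Assumption~\ref{assump:B} to transfer the bound back to $\bm x$ at the cost of the additive approximation error $d_3\tau^{d_2}p$. The stated restriction $\tau \le \bigl((d_1-10)C_{\min}(\bm x^0)/(d_3 d)\bigr)^{1/d_2}$ is exactly what forces this bias to be absorbed into a fixed fraction of $C_{\min}(\bm x^0)$, so the approximation does not destroy reconstruction. Summing the per-configuration bounds with a union bound over all $(j,i)$, whose count is combinatorial in $|G|$ and $s_1^0$, yields the additive term $2(\log|G|+\log s_1^0)$ in the exponent and hence the first, non-asymptotic inequality. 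The asymptotic claims then follow quickly: under Assumption~\ref{assump:separation} one has $nC_{\min}(\bm x^0)\gtrsim \log|G|+\log s_1^0$, so the exponent diverges to $-\infty$ and $\mathbb{P}(\hat{\bm x}=\hat{\bm x}^o)\to 1$. For the risk, I would split $E h^2(\hat{\bm x},\bm x^0)$ over $\{\hat{\bm x}=\hat{\bm x}^o\}$ and its complement: on the former it equals the oracle risk $E h^2(\hat{\bm x}^o,\bm x^0)$, while on the latter the exponentially small probability times a crude diameter bound is negligible; the parametric term $s_1^0/n$ reflects the oracle's own estimation error, giving the claimed $(1+o(1))\max(E h^2(\hat{\bm x}^o,\bm x^0),\,s_1^0/n)$.

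I expect the main obstacle to be the uniform large-deviation step. The delicate point is to verify that the entropy bound of Assumption~\ref{assump:A} is integrable enough for the chaining argument while tracking how the factor $\max((\log(|G|+s_1^0))^2,1)\,|\mathcal{B}_{j,i}|$ interacts with the combinatorial union bound, so that the final exponent retains only the $\log|G|+\log s_1^0$ dependence rather than a larger power; getting the constant $c_2$ and the interplay with the $\tau$-restriction right is where most of the care is needed.
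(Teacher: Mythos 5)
Your proposal follows essentially the same route as the paper: the same reduction of $\{\hat{\bm x}\neq\hat{\bm x}^o\}$ to a likelihood-ratio comparison, the same partition over wrong configurations $(j,i)$, the same use of Assumption~\ref{assump:B} and the $\tau$-restriction to absorb the thresholding bias, the same combinatorial union bound yielding the $2(\log|G|+\log s_1^0)$ term, and the same split of the risk over $\{\hat{\bm x}=\hat{\bm x}^o\}$ and its complement using $h\le 1$. The one step you flag as the main obstacle---the uniform large-deviation bound via bracketing entropy---is exactly what the paper handles by citing Theorem 1 of Wong and Shen (1995) after verifying its entropy condition with Assumption~\ref{assump:A}, so your plan is correct and coincides with the published argument.
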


Theorem~\ref{thm:main:theory} states that the oracle estimator $\hat{\bm x}^o$
can be accurately reconstructed, which in turn yields feature selection
consistency as well as the recovery of the performance of the oracle estimator
in parameter estimation. Moreover, according to
Assumption~\ref{assump:separation}, such conclusion still holds when $s_1^0 |G|$
grows in the order of $\exp(c_1^{-1} n C_{\min})$ . This is in contrast to
existing conclusions on consistent feature selection, where the number of
candidate features should be no larger than $\exp(c^* n)$ for some
$c^*$~\cite{zhao2006model}. In this sense, the number of candidate features is
allowed to be much larger when an additional group structure is incorporated,
particularly when each group contains considerable redundant features. 

To our knowledge, our theory for the grouped selection is the first of this
kind. However, it has a root in feature selection. The large deviation approach
used here is applicable to derive bounds for feature selection consistency. In
such a situation, the result agrees with the necessary condition for feature
selection consistency for any method, except for the constants independent of
the sample size~\cite{shen2012likelihood}. In other words, the required
conditions are weaker than those for
$L_1$-regularization~\cite{van2009conditions}. The use of the Hellinger-distance
is mainly to avoid specifying a sub-Gaussian tail of the random error. This
means that the result continues to hold even when the error does not have a
sub-Gaussian tail.

\section{Optimization Procedures}\label{sec:opt}
As mentioned in Section~\ref{sec:formulation}, efficient computation of the
convex subproblem~\eqref{eq:lm-prime} is of critical importance for the proposed
DC algorithm. Note that~\eqref{eq:lm-prime} has an identical
form of the constrained sparse group lasso problem:
\begin{equation} \label{eq:csglp:convex}
  \begin{aligned}
   &\underset{\bm x}{\text{minimize}} && \frac{1}{2}\|\bm A \bm x - \bm y\|^2_2 &&\\
    &\text{subject to} && \|\bm x\|_1 \le s_1 &&\\
   &                   && \|\bm x\|_G \le s_2 &&
  \end{aligned}
\end{equation}
except that $\bm x$ is restricted to the two support sets. As to be shown in
Section~\ref{subsec:equi:supp}, an algorithm for
solving~\eqref{eq:lm-prime} can be obtained through only a few modifications on
that of~\eqref{eq:csglp:convex}. Therefore, we first focus on
solving~\eqref{eq:csglp:convex}.

\subsection{Accelerated Gradient Method}
\label{sec:accelerated}
For large-scale problems, the dimensionality of data can be very high, therefore
first-order optimization is often preferred. We adapt the well-known accelerated
gradient method (AGM)~\cite{nesterov2007gradient, beck2009fast}, which is
commonly used due to its fast convergence rate.

To apply AGM to our formulation~\eqref{eq:csglp:convex}, the crucial step is to
solve the following Sparse Group Lasso Projection (SGLP):
\begin{equation}
  \label{eq:sglp}
  \begin{aligned}
   &\underset{\bm x}{\text{minimize}} && \frac{1}{2}\|\bm x - \bm v\|^2_2 &&\\
    &\text{subject to} && \|\bm x\|_1 \le s_1 &&\quad(C_1)\\
   &                   && \|\bm x\|_G \le s_2 &&\quad(C_2),
  \end{aligned}
\end{equation}
which is an Euclidean projection onto a convex set and a special case
of~\eqref{eq:csglp:convex} when $A$ is the identity.  For convenience, let $C_1$
and $C_2$ denote the above two constraints in what follows.

Since the AGM is a standard framework whose efficiency mainly depends on that of
the projection step, we leave the detailed description of AGM in the
supplement and introduce the efficient algorithm for this projection
step~\eqref{eq:sglp}.

\subsection{Efficient Projection}
We begin with some special cases of~\eqref{eq:sglp}. If only $C_1$
exists,~\eqref{eq:sglp} becomes the well-known $L_1$-ball
projection~\cite{duchi2008efficient}, whose optimal solution is denoted as
$\mcal{P}_1^{s_1}(\bm v)$, standing for the projection of $\bm v$ onto the
$L_1$-ball with radius $s_1$. On the other hand, if only $C_2$ is involved, it
becomes the group lasso projection, denoted as $\mcal{P}_G^{s_2}$. Moreover, we
say a constraint is~\emph{active}, if and only if an equality holds at the
optimal solution $x^*$; otherwise, it is~\emph{inactive}.

 Preliminary results are summarized in Lemma~\ref{lm:preliminary}:
\begin{lemma}\label{lm:preliminary}
Denote a global minimizer of~\eqref{eq:sglp} as $\bm x^*$. Then the
following results hold:
\begin{enumerate}
\item If both $C_1$ and $C_2$ are inactive, then $\bm x^*= \bm v$.
\item If $C_1$ is the only active constraint, i.e., $\|\bm x^*\|_1=s_1$, $\|\bm x^*\|_G <
  s_2$, then $\bm x^* = \mcal{P}^{s_1}_1(\bm v)$
\item If $C_2$ is the only active constraint, i.e., $\|\bm x^*\|_1 < s_1$, $\|\bm x^*\|_G
  = s_2$, then $\bm x^* = \mcal{P}^{s_2}_G(\bm v)$

\end{enumerate}
\end{lemma}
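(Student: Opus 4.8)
The plan is to treat \eqref{eq:sglp} as the Euclidean projection of $\bm v$ onto the closed convex set $F = B_1 \cap B_G$, where $B_1 = \{\bm x : \|\bm x\|_1 \le s_1\}$ and $B_G = \{\bm x : \|\bm x\|_G \le s_2\}$, and to characterize the unique minimizer $\bm x^*$ through the variational inequality for projections rather than through an explicit subgradient computation. Since the objective is strictly convex and $F$ is closed and convex, $\bm x^*$ is unique and satisfies
\[
\bm x^* \in F, \qquad \langle \bm v - \bm x^*,\ \bm y - \bm x^* \rangle \le 0 \quad \text{for all } \bm y \in F .
\]
I would first observe that ``$C_1$ (resp. $C_2$) is inactive'' is exactly the statement that $\bm x^*$ lies in the interior of $B_1$ (resp. $B_G$), which is the geometric fact each case will exploit.

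For part~1, both constraints inactive means $\bm x^* \in \mathrm{int}(F)$, so some ball around $\bm x^*$ is contained in $F$. Substituting $\bm y = \bm x^* \pm t\bm u$ into the variational inequality for small $t>0$ and an arbitrary direction $\bm u$ yields $\langle \bm v - \bm x^*, \bm u \rangle \le 0$ for every $\bm u$, hence $\bm v - \bm x^* = \bm 0$ and $\bm x^* = \bm v$.

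For part~2, the decisive step is a direction-lifting argument. Given $\|\bm x^*\|_G < s_2$, the interior of $B_G$ contains a neighborhood of $\bm x^*$; therefore, for any competitor $\bm y \in B_1$, the point $\bm x^* + t(\bm y - \bm x^*)$ lies in $B_1$ by convexity (both $\bm x^*$ and $\bm y$ are in $B_1$) and in $B_G$ for $t>0$ sufficiently small, hence in $F$. Feeding this point into the variational inequality and dividing by $t$ gives $\langle \bm v - \bm x^*, \bm y - \bm x^* \rangle \le 0$ for all $\bm y \in B_1$, which is precisely the characterization of the projection onto $B_1$; by uniqueness, $\bm x^* = \mathcal{P}_1^{s_1}(\bm v)$. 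Part~3 follows by the same argument with the roles of $B_1$ and $B_G$ interchanged and the characterization of $\mathcal{P}_G^{s_2}$ invoked in place of that of $\mathcal{P}_1^{s_1}$.

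I expect the direction-lifting step to be the only delicate point: one must check that nudging $\bm x^*$ toward an arbitrary point of the active ball keeps the iterate inside the inactive ball, which is exactly where strict inactivity (interiority) enters and where the $t \to 0^+$ limit and the convexity of both balls must be applied in the correct order. A fully equivalent route proceeds through the KKT conditions---Slater's condition holds because $\bm 0$ is strictly feasible whenever $s_1, s_2 > 0$, so complementary slackness forces the multiplier of the inactive constraint to vanish and the surviving stationarity condition coincides with that of the corresponding single-ball projection---but the variational-inequality argument has the advantage of never touching the nonsmooth subdifferentials of $\|\cdot\|_1$ and $\|\cdot\|_G$.
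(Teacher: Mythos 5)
Your argument is correct and complete. The paper itself states Lemma~\ref{lm:preliminary} without proof, in the main text or the supplement, treating it as a routine preliminary; so there is no authorial proof to compare against, and your write-up would serve as a valid justification. The variational-inequality route is sound: part~1 follows from testing the inequality $\langle \bm v - \bm x^*, \bm y - \bm x^*\rangle \le 0$ along $\bm y = \bm x^* \pm t\bm u$, and the direction-lifting step in parts~2 and~3 is handled correctly --- for $\bm y$ in the ball whose constraint is active, the segment point $(1-t)\bm x^* + t\bm y$ stays in that ball by convexity and enters the strictly inactive ball for small $t>0$ by continuity of the norm (here one needs only $\|\bm x^*\|_G < s_2$, not the activity of $C_1$, so your argument in fact proves the slightly stronger statement that $\bm x^* = \mcal{P}_1^{s_1}(\bm v)$ whenever $C_2$ is inactive). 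The one implicit hypothesis worth flagging is $s_1, s_2 > 0$, which is needed both for the interiority arguments and for the Slater condition in your alternative KKT route; since an inactive constraint forces the corresponding radius to be strictly positive, this is automatic under the stated case hypotheses.
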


\subsubsection{Computing $x^*$ from the optimal dual variables} 
\label{sec:computex}

Lemma~\ref{lm:preliminary} describes a global minimizer when either
constraint is inactive. Next we consider the case in which
both $C_1$ and $C_2$ are active. By the convex duality
theory~\cite{boyd2004convex}, there exist unique non-negative dual
variables $\lambda^*$ and $\eta^*$ such that $x^*$ is also
the global minimizer of the following regularized problem:
\begin{equation}
 \label{eq:sgl}
 \begin{aligned}
   &\underset{\bm x}{\text{minimize}} && \frac{1}{2}\|\bm x - \bm v\|^2_2 +
   \lambda^* \|\bm x\|_1 + \eta^* \|\bm x\|_G,
 \end{aligned}
\end{equation}
whose solution is given by the following Theorem.
\begin{theorem}[\cite{friedman2010note}]\label{thm:sgl}
The optimal solution $\bm x^*$ of~\eqref{eq:sgl} is given by
 \begin{equation}
   \begin{aligned}  \label{eq:sgl:sol}
     \bm x^*_{G_i} &= \max\{\|\bm v_{G_i}^{\lambda^*}\|_2-\eta^*, 0\}\frac{\bm v_{G_i}^{\lambda^*}}{\|\bm v_{G_i}^{\lambda^*}\|_2}\quad i = 1, 2, \cdots, |G| \\
   \end{aligned}
 \end{equation}
 where $\bm v_{G_i}^{\lambda^*}$ is computed via soft-thresholding~\cite{donoho2002noising} $\bm v_{G_i}$ with threshold $\lambda^*$ as follows:
\[
\bm v_{G_i}^{\lambda^*} = \text{SGN}(\bm v_{G_i})\cdot\max\{|\bm v_{G_i}| - \lambda^*, 0\},
\]
where $\text{SGN}(\cdot)$ is the sign function and all the operations are taken element-wisely.
\end{theorem}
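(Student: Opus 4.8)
The plan is to verify the claimed formula by the first-order optimality (Fermat) condition for convex nonsmooth minimization, after first reducing the problem to a single group. The crucial structural observation is that the objective $\frac{1}{2}\|\bm x - \bm v\|_2^2 + \lambda^* \|\bm x\|_1 + \eta^* \|\bm x\|_G$ is \emph{separable} across the groups $G_1, \dots, G_{|G|}$: since the quadratic term splits coordinatewise and both $\|\bm x\|_1 = \sum_i \|\bm x_{G_i}\|_1$ and $\|\bm x\|_G = \sum_i \|\bm x_{G_i}\|_2$ split group-by-group, minimizing the whole objective amounts to minimizing, independently for each $i$, the function $f_i(\bm x_{G_i}) = \frac{1}{2}\|\bm x_{G_i} - \bm v_{G_i}\|_2^2 + \lambda^* \|\bm x_{G_i}\|_1 + \eta^* \|\bm x_{G_i}\|_2$. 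Because the quadratic term is strictly convex, each $f_i$ has a unique minimizer, so it suffices to show that the proposed $\bm x^*_{G_i}$ in~\eqref{eq:sgl:sol} is that minimizer.

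First I would write down Fermat's rule for each group: $\bm x^*_{G_i}$ is optimal iff $\bm 0 \in \bm x^*_{G_i} - \bm v_{G_i} + \lambda^* \partial \|\bm x^*_{G_i}\|_1 + \eta^* \partial\|\bm x^*_{G_i}\|_2$, where the $L_1$ subdifferential consists of vectors whose $j$-th entry is $\mathrm{sgn}(x^*_j)$ when $x^*_j\neq 0$ and lies in $[-1,1]$ otherwise, and the $L_2$ subdifferential equals $\{\bm x^*_{G_i}/\|\bm x^*_{G_i}\|_2\}$ when $\bm x^*_{G_i}\neq\bm 0$ and the unit ball $\{\bm u:\|\bm u\|_2\le 1\}$ at the origin. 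The whole argument then splits into the two regimes distinguished by the formula, namely $\|\bm v_{G_i}^{\lambda^*}\|_2 \le \eta^*$ (yielding $\bm x^*_{G_i}=\bm 0$) and $\|\bm v_{G_i}^{\lambda^*}\|_2 > \eta^*$ (yielding a nonzero group).

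For the vanishing case I would exhibit an explicit subgradient certificate. Writing $\bm s = (\bm v_{G_i} - \bm v_{G_i}^{\lambda^*})/\lambda^*$, the definition of soft-thresholding gives $s_j = \mathrm{sgn}(v_j)\min(|v_j|/\lambda^*,1)$, so $\|\bm s\|_\infty \le 1$; and with $\bm u = \bm v_{G_i}^{\lambda^*}/\eta^*$ the case hypothesis gives $\|\bm u\|_2 = \|\bm v_{G_i}^{\lambda^*}\|_2/\eta^* \le 1$. Since $\lambda^*\bm s + \eta^*\bm u = \bm v_{G_i}$, the vector $\bm 0$ satisfies the optimality inclusion at $\bm x^*_{G_i}=\bm 0$. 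For the nonzero case, I would substitute $\bm x^*_{G_i} = (\|\bm v_{G_i}^{\lambda^*}\|_2 - \eta^*)\,\bm v_{G_i}^{\lambda^*}/\|\bm v_{G_i}^{\lambda^*}\|_2$ directly. On coordinates surviving the soft-threshold ($|v_j|>\lambda^*$) one checks the algebraic identity $x^*_j + \eta^* x^*_j/\|\bm x^*_{G_i}\|_2 = v_j^{\lambda^*} = v_j - \lambda^*\mathrm{sgn}(v_j)$, using $\|\bm x^*_{G_i}\|_2 = \|\bm v_{G_i}^{\lambda^*}\|_2 - \eta^*$ and that $\bm x^*_{G_i}$ is parallel to $\bm v_{G_i}^{\lambda^*}$; rearranging yields exactly $x^*_j - v_j + \lambda^*\mathrm{sgn}(x^*_j) + \eta^* x^*_j/\|\bm x^*_{G_i}\|_2 = 0$ since $\mathrm{sgn}(x^*_j)=\mathrm{sgn}(v_j)$. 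On the killed coordinates ($|v_j|\le\lambda^*$) we have $x^*_j = 0$ and the residual to be matched is $-v_j$, which lies in $\lambda^*[-1,1]$ by exactly this threshold condition, so it is absorbed by the free $L_1$ subgradient entry.

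The step I expect to be the main obstacle is the handling of the nonsmooth kinks, i.e., producing a valid subgradient certificate rather than merely differentiating. In the vanishing-group regime one must split $\bm v_{G_i}$ simultaneously into an $\ell_\infty$-bounded part (absorbed by the $L_1$ term) and an $\ell_2$-bounded part (absorbed by the group term), and the precise threshold $\|\bm v_{G_i}^{\lambda^*}\|_2\le\eta^*$ is exactly what makes both bounds hold; in the nonzero regime the subtlety is that soft-thresholding may zero out some coordinates, so one must separately verify stationarity on the surviving and killed coordinates and check sign consistency between $\bm x^*_{G_i}$ and $\bm v_{G_i}^{\lambda^*}$. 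Once both regimes are verified, convexity of each $f_i$ upgrades the stationarity certificate to global optimality, and strict convexity gives uniqueness, completing the proof.
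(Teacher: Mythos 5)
Your proposal is correct and complete. Note, however, that the paper itself offers no proof of this statement: Theorem~\ref{thm:sgl} is imported verbatim from \cite{friedman2010note}, so there is no in-paper argument to compare yours against. Your route --- group-by-group separability of the objective, then Fermat's rule with an explicit subgradient certificate in each of the two regimes --- is the standard derivation behind the cited result, and you have handled the genuinely delicate points: the decomposition $\bm v_{G_i} = \lambda^*\bm s + \eta^*\bm u$ with $\|\bm s\|_\infty \le 1$ and $\|\bm u\|_2 \le 1$ in the vanishing case (where the threshold $\|\bm v_{G_i}^{\lambda^*}\|_2 \le \eta^*$ is exactly what makes the $\ell_2$ bound hold), the cancellation $x^*_j + \eta^* x^*_j/\|\bm x^*_{G_i}\|_2 = v^{\lambda^*}_j$ on surviving coordinates, the absorption of $-v_j$ into $\lambda^*[-1,1]$ on killed coordinates, and the sign consistency $\mathrm{sgn}(x^*_j)=\mathrm{sgn}(v_j)$. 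Strict convexity then gives uniqueness, so the verification is a full proof. The only cosmetic caveat is the degenerate case $\bm v_{G_i}^{\lambda^*}=\bm 0$, where the displayed formula involves $0/0$ but your zero-group certificate still applies; it is worth one sentence saying the formula is read as $\bm x^*_{G_i}=\bm 0$ there.
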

Theorem~\ref{thm:sgl} gives an analytical solution of $\bm x^*$ in an ideal
situation when the values of $\lambda^*$ and $\eta^*$ are given. Unfortunately, this
is not the case and the values of $\lambda^*$ and $\eta^*$ need to be computed
directly from~\eqref{eq:sglp}. Based on Theorem~\ref{thm:sgl}, we have the
following conclusion characterizing the relations between the dual variables:
\begin{corollary}
The following equations hold:
\begin{eqnarray}
 \|\bm x^*\|_1 = \sum_{i=1}^{|G|}\max\{\|\bm v_{G_i}^{\lambda^*}\|_2-\eta^*,0\}\frac{\|\bm v_{G_i}^{\lambda^*}\|_1}{\|\bm v_{G_i}^{\lambda^*}\|_2}=s_1 & &\label{eq:c1}\\
 \|\bm x^*\|_G = \sum_{i=1}^{|G|}\max\{\|\bm v_{G_i}^{\lambda^*}\|_2 - \eta^*,0\}=s_2 & &. \label{eq:c2}
\end{eqnarray}
\end{corollary}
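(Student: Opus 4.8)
The plan is to substitute the closed-form solution from Theorem~\ref{thm:sgl} directly into the definitions of the two norms $\|\cdot\|_1$ and $\|\cdot\|_G$, evaluate the resulting blockwise expressions, and then invoke the fact that both constraints are active to equate the sums to $s_1$ and $s_2$. Since the corollary concerns the regime in which $C_1$ and $C_2$ are simultaneously active, I may assume throughout that $\|\bm x^*\|_1 = s_1$ and $\|\bm x^*\|_G = s_2$; the content of the two displayed equations is precisely to re-express these equalities in terms of the dual variables $\lambda^*$ and $\eta^*$ using the formula~\eqref{eq:sgl:sol}.

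First I would establish~\eqref{eq:c2}. Writing $\|\bm x^*\|_G = \sum_{i=1}^{|G|} \|\bm x^*_{G_i}\|_2$ and substituting $\bm x^*_{G_i}$ from~\eqref{eq:sgl:sol}, I note that $\max\{\|\bm v_{G_i}^{\lambda^*}\|_2 - \eta^*, 0\}$ is a nonnegative scalar while $\bm v_{G_i}^{\lambda^*}/\|\bm v_{G_i}^{\lambda^*}\|_2$ is a unit vector in the Euclidean norm. Hence $\|\bm x^*_{G_i}\|_2$ collapses to exactly $\max\{\|\bm v_{G_i}^{\lambda^*}\|_2 - \eta^*, 0\}$, and summing over groups together with the activity of $C_2$ yields~\eqref{eq:c2}. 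Equation~\eqref{eq:c1} is handled analogously: starting from $\|\bm x^*\|_1 = \sum_{i=1}^{|G|} \|\bm x^*_{G_i}\|_1$ and again pulling the nonnegative scalar out of the $L_1$-norm, the remaining factor is the $L_1$-norm of the same unit vector, namely $\|\bm v_{G_i}^{\lambda^*}\|_1/\|\bm v_{G_i}^{\lambda^*}\|_2$. This produces the per-group contribution $\max\{\|\bm v_{G_i}^{\lambda^*}\|_2 - \eta^*, 0\}\,\|\bm v_{G_i}^{\lambda^*}\|_1/\|\bm v_{G_i}^{\lambda^*}\|_2$, and summing with the activity of $C_1$ delivers~\eqref{eq:c1}.

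The calculation is essentially bookkeeping, so the only point requiring care---and the one I would flag as the main, though minor, obstacle---is the degenerate case $\|\bm v_{G_i}^{\lambda^*}\|_2 = 0$, in which the normalization $\bm v_{G_i}^{\lambda^*}/\|\bm v_{G_i}^{\lambda^*}\|_2$ is ill-defined. In that situation the thresholded block $\bm v_{G_i}^{\lambda^*}$ vanishes, forcing $\bm x^*_{G_i} = \bm 0$, so this group contributes nothing to either norm. I would resolve this by adopting the convention that summands with $\|\bm v_{G_i}^{\lambda^*}\|_2 = 0$ are excluded, which is harmless because the factor $\max\{\|\bm v_{G_i}^{\lambda^*}\|_2 - \eta^*, 0\}$ is then zero regardless; with this convention the two identities hold without exception.
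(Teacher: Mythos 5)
Your proof is correct and follows essentially the same route as the paper, which presents the corollary as an immediate consequence of substituting the closed-form solution~\eqref{eq:sgl:sol} into the two norms under the standing assumption that both $C_1$ and $C_2$ are active. Your extra care with the degenerate case $\|\bm v_{G_i}^{\lambda^*}\|_2 = 0$ is a sensible refinement that the paper leaves implicit.
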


Suppose $\lambda^*$ is given, then computing $\eta^*$ from~\eqref{eq:c2} amounts
to solving a median finding problem, which can be done in linear
time~\cite{duchi2008efficient}.

Finally, we treat the case of unknown $\lambda^*$ (thus unknown $\eta^*$). We propose
an efficient bisection approach to compute it.

\subsubsection{Computing $\lambda^*$: bisection}\label{sec:computelambda}

Given an initial guess (estimator) of $\lambda^*$, says $\hat{\lambda}$, one may
perform bisection to locate the optimal $\lambda^*$, provided that there exists
an oracle procedure indicating if the optimal value is greater than
$\hat{\lambda}$\footnote{An upper bound and a lower bound of $\lambda^*$ should
 be provided in order to perform the bisection. These bounds can be easily
 derived from the assumption that both $C_1$ and $C_2$ are active.}.  This
bisection method can estimate $\lambda^*$ in logarithm time. Next, we shall design
an oracle procedure.


 Let the triples
\[
(\bm x^*, \lambda^*, \eta^*) = \text{SGLP}(\bm v, s_1, s_2)
\]
be the optimal solution of~\eqref{eq:sglp} with both
constraints active, i.e., $\|\bm x^*\|_1=s_1$, $\|\bm x^*\|_G=s_2$, with $(\lambda^*,
\eta^*)$ be the optimal dual variables.  Consider the following two sparse
group lasso projections:
\begin{equation*}
 \begin{aligned}
   (\bm x, \lambda, \eta) &= \text{SGLP}(\bm v, s_1, s_2), \\
   (\bm x', \lambda', \eta') &= \text{SGLP}(\bm v, s_1', s_2').
 \end{aligned}
\end{equation*}
The following key result holds.

\begin{theorem}\label{thm:monotone}
If $\lambda \le \lambda'$ and $s_2 =s_2'$,  then $s_1 \ge
 s_1'$.
\end{theorem}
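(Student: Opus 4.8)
The plan is to reduce the statement to the monotonicity of a single scalar function and then establish that monotonicity by differentiation. Fix $\bm v$ together with the common value $s_2=s_2'$. For each $\lambda\ge 0$ set $a_i(\lambda)=\|\bm v_{G_i}^{\lambda}\|_2$ and $b_i(\lambda)=\|\bm v_{G_i}^{\lambda}\|_1$ for the soft-thresholded groups, and let $\eta(\lambda)$ be the unique number for which the group constraint~\eqref{eq:c2} holds, i.e. $\sum_i \max\{a_i(\lambda)-\eta(\lambda),0\}=s_2$; uniqueness holds because $\eta\mapsto\sum_i (a_i-\eta)_+$ is continuous and strictly decreasing wherever it is positive. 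By~\eqref{eq:c1} the induced $\ell_1$-value is
\[
  s_1(\lambda)=\sum_i \max\{a_i(\lambda)-\eta(\lambda),0\}\,c_i(\lambda),\qquad c_i(\lambda):=\frac{b_i(\lambda)}{a_i(\lambda)}\ge 1 .
\]
Since the dual variables returned by the two calls are exactly $(\lambda,\eta(\lambda))$ and $(\lambda',\eta(\lambda'))$ (uniqueness of the dual variables was noted before Theorem~\ref{thm:sgl}), we have $s_1=s_1(\lambda)$ and $s_1'=s_1(\lambda')$, so it suffices to prove that $\lambda\mapsto s_1(\lambda)$ is non-increasing.

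First I would record the derivative identities that hold on any $\lambda$-interval on which the sign patterns are frozen, namely where the surviving component set $P_i=\{j:|v_{ij}|>\lambda\}$ inside each group and the surviving group set $S=\{i:a_i(\lambda)>\eta(\lambda)\}$ are constant (here $v_{ij}$ denotes the $j$-th entry of $\bm v_{G_i}$). On such an interval $a_i'=-b_i/a_i=-c_i$ and $b_i'=-|P_i|$; differentiating $\sum_{i\in S}(a_i-\eta)=s_2$ gives $\eta'=-\bar c$ with $\bar c:=|S|^{-1}\sum_{i\in S}c_i$; and
\[
  c_i'=\frac{b_i'a_i-b_ia_i'}{a_i^2}=\frac{b_i^2-|P_i|\,a_i^2}{a_i^3}\le 0,
\]
the inequality being Cauchy--Schwarz applied to $b_i=\sum_{j\in P_i}(|v_{ij}|-\lambda)$ against $a_i^2=\sum_{j\in P_i}(|v_{ij}|-\lambda)^2$. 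In particular each ratio $c_i$ is itself non-increasing in $\lambda$.

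With these identities I would differentiate $s_1(\lambda)=\sum_{i\in S}(a_i-\eta)c_i$ on the smooth interval. The product rule gives $s_1'=\sum_{i\in S}(a_i'-\eta')c_i+\sum_{i\in S}(a_i-\eta)c_i'$. Using $a_i'-\eta'=-c_i+\bar c$, the first sum collapses to $-\sum_{i\in S}(c_i-\bar c)^2\le 0$, a negative spread term, while the second sum is nonpositive because $a_i-\eta\ge 0$ on $S$ and $c_i'\le 0$. Hence $s_1'(\lambda)\le 0$ on every frozen interval.

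The remaining, and only delicate, step is to pass from this piecewise bound to global monotonicity. The breakpoints in $\lambda$ --- values at which some $|v_{ij}|$ equals $\lambda$, or at which a group enters or leaves $S$ --- are finite in number, and $a_i,b_i$ (hence $\eta$ and $s_1$) remain continuous across them: soft-thresholding is continuous in $\lambda$, the defining equation for $\eta(\lambda)$ varies continuously, and a group joins or leaves $S$ exactly when its weight $(a_i-\eta)_+$ passes through zero, so the summand $(a_i-\eta)_+c_i$ introduces no jump. A continuous function that is non-increasing on each cell of a finite partition is non-increasing throughout, yielding $s_1(\lambda)\ge s_1(\lambda')$ whenever $\lambda\le\lambda'$, which is exactly the assertion $s_1\ge s_1'$. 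I expect the main obstacle to be precisely this breakpoint bookkeeping: confirming the continuity of $\eta(\lambda)$ and of the weights so that the per-cell derivative bounds genuinely chain together, rather than any difficulty in the derivative computation itself.
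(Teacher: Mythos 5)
Your proof is correct, but it takes a genuinely different route from the paper's. The paper argues abstractly via convex duality: fixing the group-constraint set $X=\{x:\|x\|_G\le s_2\}$, it forms the partial Lagrangian $\psi(x,\lambda)=\frac{1}{2}\|x-v\|_2^2+\lambda(\|x\|_1-s_1)$, invokes a Danskin-type envelope result (Lemma~\ref{lm:unknown}, from Bonnans and Shapiro) to identify the derivative of $\phi(\lambda)=\inf_{x\in X}\psi(x,\lambda)$ with $\|x(\lambda)\|_1$, and then uses concavity of $\phi$ (a pointwise infimum of functions affine in $\lambda$) to conclude that this derivative is non-increasing --- a three-line argument with no case analysis, which would survive replacing the two norms by other convex penalties. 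You instead work directly from the closed form of Theorem~\ref{thm:sgl} and the identities \eqref{eq:c1}--\eqref{eq:c2}, differentiating $s_1(\lambda)$ on intervals where the active sets are frozen and obtaining the sharper structural decomposition $s_1'(\lambda)=-\sum_{i\in S}(c_i-\bar c)^2+\sum_{i\in S}(a_i-\eta)\,c_i'\le 0$, with the second term controlled by Cauchy--Schwarz; all the derivative identities check out. Your route is more elementary (no envelope theorem) and more informative (it exhibits $-s_1'$ as a variance of the per-group ratios plus a ratio-monotonicity term), at the price of the breakpoint bookkeeping you yourself flag --- which does go through, since $a_i$, $b_i$, $\eta$, and the weights $(a_i-\eta)_+$ are all continuous in $\lambda$. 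The only loose end worth patching is the degenerate case $a_i(\lambda)=0$, where $c_i=b_i/a_i$ is $0/0$; such groups carry zero weight in \eqref{eq:c1} and \eqref{eq:c2} and can simply be excluded from $S$ before differentiating.
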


Theorem~\ref{thm:monotone} gives the oracle procedure with its proof presented in the
supplement. For a given estimator $\hat{\lambda}$, we compute its
corresponding $\hat{\eta}$ from~\eqref{eq:c2} and then $\hat{s_1}$
from~\eqref{eq:c1}, satisfying $(\hat{\bm x}, \hat{\lambda}, \hat{\eta}) =
\text{SGLP}(\bm v, \hat{s_1}, s_2)$. Then $\hat{s_1}$ is compared with
$s_1$. Clearly, by Theorem~\ref{thm:monotone}, if $\hat{s_1} \le s_1$, the
estimator $\hat{\lambda}$ is no less than $\lambda^*$. Otherwise, $\hat{s_1} >
s_1$ means $\hat{\lambda} < \lambda^*$. In addition, from~\eqref{eq:c1} we know
that $\hat{s_1}$ is a continuous function of $\hat{\lambda}$. Together with the
monotonicity given in Theorem~\ref{thm:monotone}, a bisection approach can be
employed to calculate $\lambda^*$. Algorithm~\ref{alg:sgpa} gives a detailed
description.

\begin{algorithm}[htb]
   \caption{Sparse Group Lasso Projection Algorithm}
   \label{alg:sgpa}
\begin{algorithmic}
  \REQUIRE $\bm v$, $s_1$, $s_2$
  \ENSURE \texttt{an optimal solution} $\bm x$
  \texttt{to the Sparse Group Projection Problem}
\end{algorithmic}
\textbf{Function} \texttt{SGLP}($\bm v$, $s_1$, $s_2$)
\begin{algorithmic}[1]
       \IF{$\|\bm x\|_1\le s_1$ \AND $\|\bm x\|_G\le s_2$}
           \RETURN $\bm v$
       \ENDIF

           \STATE $\bm x_{C_1}=\mcal{P}_1^{s_1}(\bm v)$
           \STATE $\bm x_{C_2}=\mcal{P}_G^{s_2}(\bm v)$
           \STATE $\bm x_{C_{12}}$= \texttt{bisec}($\bm v$, $s_1$, $s_2$)
           \IF{$\|\bm x_{C_1}\|_G\le s_2$} \RETURN $\bm x_{C_1}$
           \ELSIF{$\|\bm x_{C_2}\|_1\le s_1$} \RETURN $\bm x_{C_2}$
           \ELSE \RETURN $\bm x_{C_{12}}$
           \ENDIF
\end{algorithmic}

\textbf{Function} \texttt{bisec}($\bm v$, $s_1$, $s_2$)
\begin{algorithmic}[1]
           \STATE \texttt{Initialize $up$, $low$ and $tol$}
           \WHILE{$up-low > tol$}
           \STATE $\hat{\lambda} = (low+up)/2$
            \IF{\texttt{\eqref{eq:c2} has a solution} $\hat{\eta}$ \texttt{given} $v^{\hat{\lambda}}$}
                \STATE \texttt{calculate} $\h{s_1}$ \texttt{using} $\h{\eta}$ \texttt{and} $\hat{\lambda}$.
                   \IF{$\hat{s_1} \le s_1$} \STATE $up=\hat{\lambda}$
                   \ELSE \STATE $low=\hat{\lambda}$
                   \ENDIF
            \ELSE
                \STATE $up=\hat{\lambda}$
            \ENDIF
           \ENDWHILE
           \STATE $\lambda^*=up$
           \STATE \texttt{Solve~\eqref{eq:c2} to get} $\eta^*$
           \STATE \texttt{Calculate} $\bm x^*$ \texttt{from} $\lambda^*$ \texttt{and} $\eta^*$ \texttt{via}~\eqref{eq:sgl:sol}
        \RETURN $\bm x^*$
\end{algorithmic}
\end{algorithm}

\subsection{Solving Restricted version
 of~\eqref{eq:csglp:convex}}\label{subsec:equi:supp}
Finally, we modify the above procedures to compute the optimal
solution of the restricted problem~\eqref{eq:lm-prime}. To apply
the accelerated gradient method, we consider the following projection step:
\begin{equation}
 \label{eq:sglp:restricted}
 \begin{aligned}
  &\underset{x}{\text{minimize}} && \frac{1}{2}\|\bm x - \bm v\|^2_2 &&\\
   &\text{subject to} && \|\bm x^{T_1}\|_1 \le s_1 &&\quad(C_1)\\
  &                   && \|\bm x^{T_3}\|_G \le s_2 &&\quad(C_2).
 \end{aligned}
\end{equation}

Our first observation is: $T_3(\bm x)\subset T_1(\bm x)$, since if an element of
$\bm x$ lies in a group whose $L_2$-norm is less than $\tau$, then the absolute
value of this element must also be less than $\tau$. Secondly, from the
decomposable nature of the objective function, we conclude that:
\[
x^*_j = \left\{\begin{array}{rl}
v_j & \text{if } j\in (T_1)^c\\
v_j^{\lambda^*} &\text{if } j\in T_1\backslash T_3,
\end{array}\right.
\]
since there are no constraints on $x_j$ if it is outside $T_1$ and involves only
the $L_1$-norm constraint if $j\in T_1\backslash T_3$. Following routine
calculations as in~\cite{duchi2008efficient}, we obtain the following results
similar to~\eqref{eq:c1} and~\eqref{eq:c2}:
\begin{eqnarray}
 s_1 = \sum_{i\in T_2}\max\{\|\bm v_{G_i}^{\lambda^*}\|_2-\eta^*,0\}\frac{\|\bm v_{G_i}^{\lambda^*}\|_1}{\|\bm v_{G_i}^{\lambda^*}\|_2} + \sum_{j\in T_1\backslash T_3}v_j^{\lambda^*} \label{eq:c1:restricted}   & &\\
 s_2 = \sum_{i\in T_2}\max\{\|\bm v_{G_i}^{\lambda^*}\|_2 - \eta^*,0\}.\label{eq:c2:restricted} & &
\end{eqnarray}

Based on~\eqref{eq:c1:restricted} and \eqref{eq:c2:restricted}, we design a
similar bisection approach to compute $\lambda^*$ and thus $(\bm x^*)^{T_3}$, as
in Algorithm~\ref{alg:sgpa}. Details are deferred to the supplement.

\section{Significance}~\label{sec:significance}
 This section is devoted to a brief discussion of advantages of our work
statistically and computationally. Moreover, it explains why the
proposed method is useful to perform efficient and interpretable feature
selection with a given natural group structure.

\noindent \textbf{Interpretability.} The parameters in~\eqref{eq:csglp1} are highly
interpretable in that $s_1$ and $s_2$ are upper bounds of the
number of nonzero elements as well as that of groups. This is
advantageous, especially in the presence of certain prior knowledge regarding
the number of features and/or that of groups. However, such an interpretation
vanishes with convex methods such as lasso or sparse group
lasso, in which incorporating such prior knowledge
often requires repeated trials of different parameters.

\noindent \textbf{Parameter tuning.} Typically, tuning parameters for good generalization
 usually requires considerable amount work due to a large number of choices of
 parameters.  However, tuning in~\eqref{eq:csglp:nonconvex} may search through
 integer values in a bounded range, and can be further simplified when certain
 prior knowledge is available. This permits more efficient tuning than its
 regularization counterpart. Based on our limited experience, we note that
 $\tau$ does not need to be tuned precisely as we may fix at some small values.

\noindent \textbf{Performance and Computation.} Although our model~\eqref{eq:csglp1}
 is proposed as a computational surrogate of the ideal $L_0$-method, its
 performance can also be theoretically guaranteed, i.e., consistent feature
 selection can be achieved. Moreover, the computation of our model is much more
 efficient and applicable to large-scale applications.

\section{Empirical Evaluation}\label{sec:exp}

This section performs numerical experiments to evaluate the proposed methods
in terms of the efficiency and accuracy of sparse group feature selection.
Evaluations are conducted on a PC with i7-2600 CPU, $8.0$ GB
memory and 64-bit Windows operating system.

\subsection{Evaluation of Projection Algorithms}
Since the DC programming and the accelerated gradient methods are both standard,
the efficiency of the proposed nonconvex formulation~\eqref{eq:csglp1} depends
on the projection step in~\eqref{eq:sglp}. Therefore, we focus on evaluating the
projection algorithms and comparing with two popular projection algorithms:
Alternating Direction Multiplier Method (ADMM)~\cite{boyd2011distributed} and
Dykstra's projection algorithm~\cite{combettes2010proximal}. We provide a
detailed derivation of adapting these two algorithms to our formulation in the
supplement.

To evaluate the efficiency, we first generate the vector $\bm v$ whose
entries are uniformly distributed in $[-50, 50]$ and the dimension of $\bm v$, denoted as $p$, is
chosen from the set $\{10^2, 10^3, 10^4, 10^5, 10^6\}$. Next we partition the
vector into $10$ groups of equal size. Finally, $s_2$ is set to $5\log(p)$ and $s_1$, the radius of the
$L_1$-ball, is computed by $\frac{\sqrt{10}}{2}s_2$
(motivated by the fact that $s_1 \le \sqrt{10}s_2$).

For a fair comparison, we run our projection algorithm until
converge and record the minimal objective value as $f^*$.  Then we
run ADMM and Dykstra's algorithm until their objective values become close to
ours. More specifically, we terminate their iterations as soon as
$f_{\text{ADMM}} - f^* \le 10^{-3}$ and $f_{\text{Dykstra}} - f^* \le 10^{-3}$,
where $f_{\text{ADMM}}$ and $f_{\text{Dykstra}}$ stand for the objective value
of ADMM and Dykstra's algorithm respectively. Table~\ref{tab:projection:time}
summarizes the average running time of all three algorithms over 100 replications.

\begin{table}
\caption{Running time (in seconds) of Dykstra's, ADMM and our projection algorithm. All three algorithms are averaged over 100 replications.}\label{tab:projection:time}
\vskip 0.15in
\begin{center}
 \begin{small}
   \begin{sc}
     \begin{tabular}{c||ccccc}
 \hline
Methods & $10^2$  & $10^3$  & $10^4$  & $10^5$  & $10^6$  \\ \hline\hline
Dykstra &0.1944 &0.5894 &4.8702 &51.756 &642.60 \\
ADMM &0.0519 &0.1098 &1.2000 &26.240 &633.00 \\
ours & $< 10^{-7}$ &0.0002 &0.0051 &0.0440 &0.5827 \\
 \hline
   \hline
\end{tabular}
 \end{sc}
 \end{small}
\end{center}
\vskip -0.1in
\end{table}

Next we demonstrate the accuracy of our projection algorithm. Toward this end, the general convex optimization toolbox
CVX~\cite{cvx2011grant} is chosen as the baseline. Following the same strategy
of generating data, we report the distance (computed from the Euclidean norm
$\|\cdot\|_2$) between optimal solution of the three projection algorithms and
that of the CVX. Note that the projection is strictly convex with a unique
global optimal solution.

For ADMM and Dykstra's algorithm, the termination criterion is that the relative
difference of the objective values between consecutive iterations is less than a
threshold value. Specifically, we terminate the iteration if $|f(\bm x_{k-1})-f(\bm
x_{k})| \le 10^{-7} f(\bm x_{k-1})$. For our projection algorithm, we set the
$tol$ in Algorithm~\ref{alg:sgpa} to be $10^{-7}$. The results are summarized in
Table~\ref{tab:projection:accu}. Powered by second-order optimization
algorithms, CVX can provide fast and accurate solution for problems of moderate
size but would suffer from great computational burden for large-scale
ones. Therefore we only report the results up to $5,000$ dimensions.

\begin{table}[tbh]
 \caption{Distance between the optimal solution of projection algorithms and that of the CVX. All the results are averaged over 100 replications.}
 \label{tab:projection:accu}
\vskip 0.15in
 \begin{center}
   \begin{small}
       \begin{tabular}{c||cccccc}\hline
 Methods & 50 & 100 & 500 & 1000 & 5000 \\ \hline\hline
 Dykstra &9.00 &9.81 &11.40 &11.90 &12.42  \\
 ADMM &0.64 & 0.08  &3.6e-3 &6.3e-3 &1.3e-2 \\
 ours &1.4e-3 & 1.1e-3  &1.2e-3 &1.7e-3 &7.3e-3 \\ \hline     \hline
 \end{tabular}
   \end{small}
 \end{center}
\vskip -0.1in
\end{table}

From Tables~\ref{tab:projection:time} and~\ref{tab:projection:accu}, we note
that both ADMM and our algorithm yield more accurate solution than that of
Dykstra's. For projections of moderate size, all three algorithms perform
well. However, for large-scale ones, our advantage on efficiency is evident.

\subsection{Performance on Synthetic Data}\label{subsec:perf:synthetic}
\subsubsection{Experimental Setup}
We generate a $60\times 100$ matrix $\bm A$, whose entries follow i.i.d standard
normal distribution. The $100$ features (columns) are partitioned into $10$
groups of equal size. The ground truth vector $\bm x_0$ possesses nonzero elements
only in $4$ of the $10$ groups. To further enhance sparsity, in each nonzero
group of $\bm x_0$, only $t~(t\le 10)$ elements are nonzero, where $t$ is
uniformly distributed from $[1,5]$. Finally $\bm y$ is generated according to $\bm A \bm x_0 + \bm z$ with $\bm z$ following distribution $\mcal{N}(0,
0.5^2)$, where $\bm A$ and $\bm y$ are divided into training and testing set of equal size.

We fit our method to the training set and compare with lasso, group lasso and
sparse group lasso. The tuning parameters of the convex methods are selected from
$\{0.01, 0.1, 1, 10\}$, whereas for our method, the number of nonzero groups is
selected from the set $\{2, 4, 6, 8\}$ and the number of features is chosen
from $\{2s_2, 4s_2, 6s_2, 8s_2\}$. Leave-one-out cross-validation is conducted
over the training set for choosing the best tuning parameter for all the methods.
\subsubsection{Results and Discussions}

We use following metrics for evaluation:
\begin{itemize}
\item Estimation error: $\|\hat{\bm x}- \bm x_0\|_2^2 $
\item Prediction error: $\|\bm A \hat{\bm x} - \tilde{\bm y}\|_2^2 $
\item Group precision: ${|T_2(\hat{\bm x}) \cap T_2(\bm x_0)|}/{|T_2(\hat{\bm x})|}$
\item Group recall: ${|T_2(\hat{\bm x}) \cap T_2(\bm x_0)|}/{|T_2(\bm x_0)|}$
\end{itemize}
where $\hat{\bm x}$ is the estimator obtained from~\eqref{eq:csglp1} and
$\tilde{\bm y}$ is an independent vector following the same distribution as $\bm
y$. The group precision and recall demonstrate the capability of recovering the
group structure from data. We report the results in Table~\ref{tab:syn:ret} and
observe that our model generally exhibits better performance. Note that although
our model does not provide the best result on the metric of group recall, the
group precision of our model is significantly better than the others, illustrating the fact that the three convex methods recover more redundant
groups.

\begin{table}[tbh]
 \caption{Comparison of Performance on synthetic data, where glasso stands for the group lasso and sglasso denotes sparse group lasso. All the results are averaged for 100 replications.}\label{tab:syn:ret}
\vskip 0.15in
 \begin{center}
   \begin{small}
     \begin{sc}
           \begin{tabular}{c||cccc}\hline
 Methods & Esti. & Pred. & Prec. & Rec. \\ \hline \hline
 lasso & 4.7933 & 151.05 & 0.5212 & \textbf{0.8700} \\
 glasso & 8.1230 & 244.53 & 0.5843 & 0.7575 \\
 sglasso & 4.7649 & 151.29 & 0.5215 & 0.8675\\
 ours & \textbf{4.6617} & \textbf{142.18} & \textbf{0.7848} & 0.6450 \\
 \hline     \hline
 \end{tabular}
      \end{sc}
   \end{small}
 \end{center}
\vskip -0.1in
\end{table}

\subsection{Performance on Real-world Application}\label{subsec:perf:real}
Our method is further evaluated on the application of examining Electroencephalography (EEG) correlates
of genetic predisposition to alcoholism~\cite{Frank+Asuncion:2010}.
EEG records the brain's spontaneous electrical activity by measuring the voltage fluctuations over multiple electrodes placed on the
scalp. This technology has been widely used in clinical diagnosis, such as coma, brain death and genetic predisposition
to alcoholism. In fact, encoded in the EEG data is a certain
group structure, since each electrode records the electrical activity of a certain region of the scalp. Identifying and utilizing such spatial information has the potential of increasing stability of a prediction.

The training set contains $200$ samples of $16384$ dimensions, sampled from 64 electrodes placed on
subject's scalps at 256 Hz (3.9-msec epoch) for 1 second. Therefore, the data
can naturally be divided into 64 groups of size $256$. We apply the lasso,
group lasso, sparse group lasso and our method on the training set and adapt the
5-fold cross-validation for selecting tuning parameters. More
specifically, for lasso and group lasso, the candidate tuning parameters are
specified by 10 parameters\footnote{$\lambda_{\text{lasso}} =
 \text{logspace}(10^{-3}, 1)$,
 $\lambda_{\text{glasso}}=\text{logspace}(10^{-2}, 1)$} sampled using the
logarithmic scale from the parameter spaces, while for the sparse group lasso,
the parameters form a $10\times 10$ grid\footnote{The product space of
 $\lambda_{\text{lasso}}\times \lambda_{\text{glasso}}$}, sampled from the
parameter space in logarithmic scale. For our method, the number of groups is
selected from the set: $s_2 = \{30, 40, 50\}$ and $s_1$, the number of
features is chosen from the set $\{50s_2, 100s_2, 150s_2\}$. The accuracy of
classification together with the number of selected features and groups over a
test set, which also contains $200$ samples, are reported in
Table~\ref{tab:eeg}. Clearly our method achieves the best performance
of classification with the least number of groups. Note that, although lasso's
performance is almost as good as ours with even less features, however, it fails
to identify the underlying group structure in the data, as revealed by the fact all $64$ groups are selected.

\begin{table}[thb]
 \caption{Comparison of performance on EEG data, where glasso stands for group lasso and sglasso denotes sparse group lasso.} \vspace{0.05in}
 \label{tab:eeg}
 \centering
 \begin{tabular}{c||ccc}\hline
   Methods & Accuracy & \# Feature &\# Group \\ \hline \hline
   lasso &67.0  &2068  & 64  \\
   glasso &62.5  &8704  &34 \\
   sglasso &65.5 &4834 &61 \\
   ours &68.0 &3890 &25 \\
   \hline
   \hline
 \end{tabular}
\end{table}

\section{Conclusion and Future Work}\label{sec:conclusion}

This paper expands a nonconvex paradigm into sparse group feature selection. In
particular, theoretical properties on the accuracy of selection and parameter
estimation are analyzed. In addition, an efficient optimization scheme is
developed based on the DC programming, accelerated gradient method and efficient
projection. The efficiency and efficacy of the proposed method are validated on both synthetic data and real-world applications.

The proposed method will be further investigated on real-world applications involving the group structure. Moreover, extending the proposed model to multi-modal multi-task learning~\cite{zhang2011multimodal} is another promising direction.

{ \small
\bibliography{sgfs}
\bibliographystyle{plain}
}

\newpage







\setlength{\textwidth}{16.5cm}
\setlength{\textheight}{22.5cm}
\setlength{\oddsidemargin}{0.0in}
\setlength{\evensidemargin}{0.0in}
\setlength{\topmargin}{-0.0in}
\setlength{\parskip}{7pt plus 2pt minus 2pt}

\begin{center}
\Large Supplementary Material for paper: \\Efficient Sparse Group Feature Selection via \\ Nonconvex Optimization
\end{center}

\setcounter{section}{0}
\section{Proof of Theorem~\ref{thm:main:theory}}
The proof uses a large deviation probability inequality of~\cite{wong1995probability} to treat one-sided log-likelihood
ratios with constraints.  

  Let $\mathcal S = \big \{\bm x^{\tau}: \|\bm x^{\tau}\|_0 \leq s_1^0, 
\|\bm x^{\tau}\|_{0,G} \leq s_2^0 \big\}$, $\|\bm x\|_0=\sum_{j=1}^p 
I(|x_j| \neq 0)$ is the $L_0$-norm of $\bm x$, and 
$\|\bm x\|_{0,G}=\sum_{j=1}^{|G|} I(\|\bm x_j\|_2 \neq 0)$ is the 
$L_0$-norm over the groups. 
Now we partition $\mathcal S$.  Note that for $C \subset 
(G_1,\cdots,G_{|G|})$, it can be partitioned into $C=(C \setminus C^0) 
\cup (C \cap C^0)$.  Then 
$$\mathcal S=  \bigcup_{i = 0}^{s^0_2} \bigcup_{C\in \mathcal B_i} 
\mathcal S_{A_C,C},
$$ 
where $S_{A_C,C}=
\big \{\bm x^{\tau} \in \mathcal S: C(\bm x) = C=(G_{i_1},\cdots,
G_{i_k}), \sum_j |A_{G_j}| \leq s_1^0\big\}$, 
and $\mathcal B_{i}= \{C \neq C_0: |C^0 \setminus C| = i, |C| \leq s_2^0\}$, 
with $|\mathcal B_{i}|=\binom{s^0_2}{s^0_2 - i} 
\sum_{j = 0}^{i}\binom{|G|-s^0_2}{j}$; $i = 0,\cdots, s^0_2$.

To bound the error probability, let $L(\bm x)=-\frac{1}{2} \|\bm A \bm x- \bm
y\|^2$ be the likelihood. Note that
\[
\{\hat{\bm x} \neq \hat{\bm x}^{o}\} \subseteq 
\{L(\hat{\bm x}) - L(\hat{\bm x}^{o}) \geq 0\} \subseteq \{L(\hat{\bm x})- L(\bm x^0) \geq 0\}.
\]
This together with $\{\hat{\bm x}\neq\hat{\bm x}^{o} \} \subseteq \{\hat{\bm x} 
\in \mathcal S\} $ implies that 
\[
\{\hat{\bm x}\neq\hat{\bm x}^{o} \} \subseteq \{ L(\hat{\bm x})- L(\bm x^0) \geq
0\} \cap \{\hat{\bm x} \in \mathcal S\}.
\] 
Consequently, 
\begin{equation*}
  \begin{aligned}
   &  I \equiv P\big(\hat{\bm x}\neq\hat{\bm x}^{o}\big) \\
&\leq P \Big(  L(\hat{\bm x}) - L(\bm x^0)   \geq 0;\hat{\bm x} \in \mathcal S\Big) \\
 & \leq   \sum_{i = 1}^{s^0_2} \sum_{C \in {\mathcal B}_i}  
\sum_{S_{A_C,C}} P^* \Big( \sup_{\bm x \in  \mathcal S_{A_C,C}} \big(L(\bm x)
- L(\bm x^0) \big)\geq 0   \Big)  \\
 &\leq  \sum_{i = 1}^{s^0_2} \sum_{j=1}^{s_1^0}
\sum_{|C|=i,  |A_G| =j} 
 P^* \Big( \sup_{ \big \{ - \log (1- h^2(\bm x,\bm x^0) ) 
\geq\max(i,1)  C_{\min}(\bm x^0) -d_3 \tau^{d_2} p, 
\bm x \in \mathcal S_{A_C,C} \big\}} 
\big(L(\bm x)- L(\bm x^0) \big)\geq 0   \Big),    
  \end{aligned}
\end{equation*}
where $P^*$ is the outer measure and the last two inequalities use the fact that
$\mathcal S_{A_C,C} \subseteq\{\bm x \in \mathcal S_{A_C,C}:
\max(|C^0 \setminus C|, 1)C_{\min}(\bm x^0) \leq - 
\log (1- h^2(\bm x,\bm x^0) )\} \subseteq  
\{- \log (1 -  h^2 (\bm x, \bm x^0)) \geq d_1 \max(i,1) 
C_{\min}(\bm x^0) - d_3 \tau^{d_2} p\}$, under Assumption~\ref{assump:B}.

  For $I$, we apply Theorem 1 of~\cite{wong1995probability} to bound each term. Towards
this end, we verify their entropy
condition (3.1) for the local entropy over ${\mathcal S}_{A_C,C}$ for 
$|C|=1,\cdots,s^0_2$ and $|A|= 1,\cdots, s^0_1$.
Under Assumption~\ref{assump:A}
$\varepsilon=\varepsilon_{n,p}=(2 c_0)^{1/2} c_4^{-1} \log (2^{1/2}/c_3)
\log p(\frac{s^0_1}{n})^{1/2}$ satisfies there with respect to
$\varepsilon>0$, that is,
\begin{eqnarray}
 \sup_{\{0 \leq |A| \leq p_0\}} \int^{2^{1/2} \varepsilon}_{2^{-8}
\varepsilon^2} H^{1/2}(t/c_3, {\cal F}_{ji})d t \leq p_0^{1/2} 2^{1/2}
 \varepsilon \log (2/2^{1/2}c_3) \leq  c_4 n^{1/2} \varepsilon^2.
 \label{entropy}
 \end{eqnarray}
for some constant $c_3>0$ and $c_4>0$, say $c_3=10$ and 
$c_4=\frac{(2/3)^{5/2}}{512}$.
By Assumption~\ref{assump:A}, $C_{\min}(\bm x^0) \geq \varepsilon^2_{n,p_0,p}$ implies
(\ref{entropy}), provided that $s^0_1 \geq (2 c_0)^{1/2} c_4^{-1} \log (2^{1/2}/c_3)$.

Note that $|\mathcal B_{i}|=\binom{s^0_2}{s^0_2 - i}
\sum_{j = 0}^{i}\binom{|G|-s^0_2}{j} \leq (|G|(|G|-s^0_2)^i \leq
(|G|^2/4)^i$ by the binomial coefficients formula. Moreover,
$\sum_{j=1}^{s_1^0} 2^j i^j \leq i^{s_1^0}$, and  
$\sum_{j_1+\cdots+j_i=j} \binom{j}{j_1,\cdots j_i} 2^j
=(2 i)^j$
using the Multinomial Theorem.
By  Theorem 1 of~\cite{wong1995probability}, there exists 
a constant $c_2>0$, say $c_2=\frac{4}{27} \frac{1}{1926}$,
\begin{eqnarray*}
 I & \leq &  \sum_{i=1}^{s^0_2} |\mathcal B_i| \sum_{j=1}^{s_1^0} 
\sum_{(j_1,\cdots j_i)} \binom{j}{j_1,\cdots j_i} 2^{j_1} \cdots
2^{j_i} \exp \big(- c_2 n i C_{\min}(\bm x^0) \big) \\
& \leq &    \sum_{i=1}^{s^0_2} 
\exp \big(- c_2 n i C_{\min}(\bm x^0)+ 2 i(\log |G|+\log s_1^0) \big) \\
& \leq &  \exp \big(- c_2 n  C_{\min}(\bm x^0) + 
2 (\log |G|+\log s_1^0)\big).
\end{eqnarray*}

  Let $G=\{\hat{\bm x} \neq \hat{\bm x}^0\}$.
For the risk property, $E h^2(\hat{\bm x},\bm x^0)= 
E h^2(\bm{\hat x}^0, {\bm x}^0)+ E h^2(\hat{\bm x},
\bm x^0) I(G)$ is upper bounded by
\begin{eqnarray*}
 E h^2(\bm{\hat x},{\bm x}^0)
+ \exp \big(- c_2 n  C_{\min}(\bm x^0) +
2 (\log |G|+\log s_1^0)\big)=(1+o(1))
E h^2(\bm{\hat x}^0,{\bm x}^0),
\end{eqnarray*}
using the fact that $h(\hat{\bm x},\bm x^0) \leq 1$.
This completes the proof.

\section{Proof of Theorem~\ref{thm:monotone}}
We utilize an intermediate lemma from~\cite{bonnans1998optimization}:
\begin{lemma}\label{lm:unknown}
  Let $X$ be a metric space and U be a normed space. Suppose that for all $x\in X$,
  the function $\psi(x,\cdot)$ is differentiable and that $\psi(x, Y)$ and
  $D_Y\psi(x, Y)$ (the partial derivative of $\psi(x, Y)$ with respect to $Y$)
  are continuous on $X\times U$. Let $\Phi$ be a compact subset of $X$. Define
  the optimal value function as $\phi(Y) = \inf_{x\in\Phi}\psi(x, Y)$. The
    optimal value function $\phi(Y)$ is directionally differentiable. In
    addition, if for any $Y\in U$, $\psi(\cdot, Y)$ has a unique minimizer
    $x(Y)$ over $\Phi$, then $\phi(Y)$ is differentiable at $Y$ and the gradient
    of $\phi(Y)$ is given by $\phi'(Y) = D_Y\psi(x(Y), Y)$.
\end{lemma}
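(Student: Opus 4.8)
This statement is the classical Danskin/envelope theorem for a minimum value function, so the plan is to follow the standard two-part argument: first establish directional differentiability together with an explicit formula for the directional derivative, then specialize to the unique-minimizer case to extract full differentiability. Throughout, write $M(Y) = \operatorname*{arg\,min}_{x \in \Phi} \psi(x, Y)$ for the solution set; since $\Phi$ is compact and $x \mapsto \psi(x, Y)$ is continuous (a consequence of the joint continuity of $\psi$ on $X \times U$), the set $M(Y)$ is nonempty and compact and $\phi(Y)$ is well defined and finite. For a fixed direction $d \in U$ the target identity is
\[
\phi'(Y; d) = \lim_{t \downarrow 0} \frac{\phi(Y + td) - \phi(Y)}{t} = \min_{x \in M(Y)} D_Y\psi(x, Y)\,d,
\]
and I would prove it by matching an upper and a lower estimate on the difference quotient.

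For the upper estimate I would fix any $\bar x \in M(Y)$ and exploit its feasibility for the perturbed problem: $\phi(Y + td) \le \psi(\bar x, Y + td)$, while $\phi(Y) = \psi(\bar x, Y)$. Dividing by $t > 0$ and letting $t \downarrow 0$, differentiability of $\psi(\bar x, \cdot)$ gives $\limsup_{t \downarrow 0} t^{-1}\big(\phi(Y+td) - \phi(Y)\big) \le D_Y\psi(\bar x, Y)\,d$; taking the infimum over $\bar x \in M(Y)$ (attained by compactness of $M(Y)$ and continuity of $x \mapsto D_Y\psi(x, Y)\,d$) yields the $\le$ half.

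The lower estimate is where the real work lies. I would take a sequence $t_n \downarrow 0$ realizing the $\liminf$ of the difference quotient and, for each $n$, a minimizer $x_n \in M(Y + t_n d)$. By compactness of $\Phi$, pass to a subsequence with $x_n \to \bar x \in \Phi$. Joint continuity of $\psi$ then forces $\bar x \in M(Y)$: for any fixed $z \in \Phi$ we have $\phi(Y + t_n d) \le \psi(z, Y + t_n d) \to \psi(z, Y)$, while $\phi(Y + t_n d) = \psi(x_n, Y + t_n d) \to \psi(\bar x, Y)$, so $\psi(\bar x, Y) \le \psi(z, Y)$ for all $z$, i.e.\ $\psi(\bar x, Y) = \phi(Y)$ (this incidentally proves continuity of $\phi$ along the ray). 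Since $\phi(Y) \le \psi(x_n, Y)$, I can write $\phi(Y + t_n d) - \phi(Y) \ge \psi(x_n, Y + t_n d) - \psi(x_n, Y)$, and the mean value theorem applied to $s \mapsto \psi(x_n, Y + s d)$ produces $\theta_n \in (0, t_n)$ with $t_n^{-1}\big(\psi(x_n, Y + t_n d) - \psi(x_n, Y)\big) = D_Y\psi(x_n, Y + \theta_n d)\,d$. The crux is passing to the limit in this last term: since $(x_n, Y + \theta_n d) \to (\bar x, Y)$, continuity of $D_Y\psi$ on $X \times U$ gives $D_Y\psi(x_n, Y + \theta_n d)\,d \to D_Y\psi(\bar x, Y)\,d \ge \min_{x \in M(Y)} D_Y\psi(x, Y)\,d$, which is the $\ge$ half. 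The main obstacle is exactly this limit exchange: controlling the evaluation of $D_Y\psi$ at the moving argument $(x_n, Y + \theta_n d)$ as both the minimizer and the perturbation vary simultaneously, which is why the equicontinuity furnished by continuity of $D_Y\psi$ on a compact neighborhood of $(\bar x, Y)$ is indispensable.

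Finally, under the uniqueness hypothesis $M(Y) = \{x(Y)\}$, the minimum in the directional-derivative formula collapses to a single term, giving $\phi'(Y; d) = D_Y\psi(x(Y), Y)\,d$ for every $d \in U$. This expression is linear and continuous in $d$, so $\phi$ is (G\^ateaux) differentiable at $Y$ with gradient $\phi'(Y) = D_Y\psi(x(Y), Y)$, which is the asserted conclusion.
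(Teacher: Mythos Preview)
Your argument is correct and is essentially the standard Danskin-type proof of this envelope result. However, the paper does not actually prove this lemma: it is quoted verbatim as an auxiliary result from \cite{bonnans1998optimization} and invoked without proof in the derivation of Theorem~\ref{thm:monotone}. So there is no ``paper's own proof'' to compare against; you have supplied a self-contained proof where the paper simply cites the literature. One minor remark: your final step yields G\^ateaux differentiability (linearity and continuity of $d \mapsto \phi'(Y;d)$), which is all the paper needs since in its application $U=\mathbb{R}$ and the parameter is the scalar $\lambda$; upgrading to Fr\'echet differentiability in a general normed space $U$ would require an additional uniformity argument over directions, but this is not relevant here.
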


\begin{proof}[Proof of Theorem~\ref{thm:monotone}]
For the proof, an intermediate lemma  will
be used, with its details given in the Appendix. Since both constraints are
  active, if $(x,\lambda,\eta) = \text{SGLP}(v, s_1, s_2)$, then $x$ and
  $\lambda$ are also the optimal solutions to the following problem:
\begin{equation*}
  \begin{aligned}
    &\underset{\lambda}{\text{maximize}} ~\underset{x\in X}{\text{minimize}} &&
    \psi(x,\lambda) = \frac{1}{2}\|x-v\|_2^2 + \lambda(\|x\|_1 - s_1) ,
  \end{aligned}
\end{equation*}
where $X=\{x: \|x\|_G \le s_2\}$. By Lemma~\ref{lm:unknown},
$\phi(\lambda)=\inf_{x\in X}\psi(x,\lambda)$ is differentiable with the
derivative given by $\|x\|_1$. In addition, as a pointwise infimum of a concave
function, so does $\phi(\lambda)$~\cite{boyd2004convex} and
its derivative, $\|x\|_1$, is non-increasing. Therefore $s_1=\|x\|_1$ is
non-decreasing as $\lambda$ becomes smaller. This completes the proof.
\end{proof}

\section{Algorithm for Solving~\eqref{eq:sglp:restricted}}
We give a detailed description of algorithm for solving the restricted
projection~\eqref{eq:sglp:restricted} in Algorithm~\ref{alg:rsgpa}.

\begin{algorithm}[H]
    \caption{Restricted Sparse Group Lasso Projection Algorithm}
    \label{alg:rsgpa}
 \begin{algorithmic}
   \REQUIRE $\bm v$, $s_1$, $s_2$, $T_1$, $T_3$
   \ENSURE \texttt{an optimal solution} $\bm x$
   \texttt{to the Restricted Sparse Group Projection Problem~\eqref{eq:sglp:restricted}}
 \end{algorithmic}
\textbf{Function} \texttt{RSGLP}($\bm v$, $s_1$, $s_2$, $T_1$, $T_3$)
 \begin{algorithmic}[1]
        \IF{$\|\bm x^{T_1}\|_1\le s_1$ \AND $\|\bm x^{T_3}\|_G\le s_2$}
            \RETURN $\bm v$
        \ENDIF
        \STATE $\bm x_{C_1}^{(T_1)^c}=\bm v^{(T_1)^c}$, $\bm
        x_{C_1}^{T_1}=\mcal{P}_1^{s_1}(\bm v^{T_1})$ 
        \STATE $\bm
        x_{C_2}^{(T_3)^c}=\bm v^{(T_3)^c}$, $\bm x_{C_2}^{T_3}=\mcal{P}_G^{s_2}(\bm
        v^{T_3})$ 
        \STATE $\bm x_{C_{12}}^{(T_1)^c} = \bm v^{(T_1)^c}$, $\bm x_{C_{12}}^{T_1}$= \texttt{bisec}($\bm v$, $s_1$, $s_2$, $T_1$, $T_3$)
        \IF{$\|\bm x_{C_1}^{T_3}\|_G\le s_2$} 
            \RETURN $\bm x_{C_1}$ 
        \ELSIF{$\|\bm x_{C_2}^{T_1}\|_1\le s_1$} 
            \RETURN $\bm x_{C_2}$ 
        \ELSE 
            \RETURN $\bm x_{C_{12}}$
        \ENDIF
 \end{algorithmic}

\textbf{Function} \texttt{bisec}($\bm v$, $s_1$, $s_2$, $T_1$, $T_3$)
 \begin{algorithmic}[1]
            \STATE \texttt{Initialize $up$, $low$ and $tol$}
            \WHILE{$up-low > tol$}
            \STATE $\hat{\lambda} = (low+up)/2$
             \IF{\texttt{\eqref{eq:c2:restricted} has a solution} $\hat{\eta}$ \texttt{given} $v^{\hat{\lambda}}$}
                 \STATE \texttt{calculate} $\h{s_1}$ \texttt{using} $\h{\eta}$ \texttt{and} $\hat{\lambda}$.
                    \IF{$\hat{s_1} \le s_1$} \STATE $up=\hat{\lambda}$
                    \ELSE \STATE $low=\hat{\lambda}$
                    \ENDIF
             \ELSE
                 \STATE $up=\hat{\lambda}$
             \ENDIF
            \ENDWHILE
            \STATE $\lambda^*=up$
            \STATE \texttt{Solve~\eqref{eq:c2:restricted} to get} $\eta^*$
            \STATE \texttt{Calculate} $(\bm x^*)^{T_1}$ \texttt{from} $\lambda^*$ \texttt{and} $\eta^*$.
         \RETURN $(\bm x^*)^{T_1}$
 \end{algorithmic}
 \end{algorithm}

\section{Accelerated Gradient Method}
\label{sec:sup:agm}
The AGM procedure is listed in Algorithms~\ref{alg:agm}, in which
$f(\bm x)$ is the objective function $\frac{1}{2}\|\bm A \bm x - \bm y\|_2^2$
with $\nabla f(\bm x)$ denotes its gradient at $\bm x$. In addition, $f_{L, \bm
  u}(\bm x)$ is the linearization of $f(\bm x)$ at $\bm u$ defined as follows:
\[
f_{L, \bm u}(\bm x) = f(\bm u) + \nabla f(\bm u)^T(\bm x - \bm u) + \frac{L}{2}\|\bm x-\bm u\|_2^2.
\]
\begin{algorithm}[H]
    \caption{Accelerated Gradient Method~\cite{nesterov2007gradient, beck2009fast} for~\eqref{eq:csglp:convex}}
    \label{alg:agm}
 \begin{algorithmic}[1]
    \REQUIRE $\bm A$, $\bm y$, $s_1$, $s_2$, $L_0$, $\bm x_0$,
    \ENSURE solution $\bm x$ to~\eqref{eq:csglp:convex}
    \STATE \textbf{Initialize}: $L_0$, $\bm x_1=\bm x_0$, $\alpha_{-1}=0$, $\alpha_0=1$, $t=0$.
    \REPEAT
    \STATE $t=t+1$, $\beta_t=\frac{\alpha_{t-2}-1}{\alpha_{t-1}}$, $\bm u_t=\bm x_t+\beta_t(\bm x_t- \bm x_{t-1})$
    \STATE \textbf{Line search}: \texttt{Find the smallest }$L=2^jL_{t-1}$ \texttt{ such that}
    \[
        f(\bm x_{t+1})\le f_{L, \bm u_t}(\bm x_{t+1}),
    \]
    \texttt{where} $\bm x_{t+1}=\text{SGLP}(\bm u_t-\frac{1}{L}\nabla f(\bm
    u_t), s_1, s_2)$ \STATE $\alpha_{t+1}= \frac{1+\sqrt{1+4\alpha_t^2}}{2}$,
    $L_t=L$.  \UNTIL{Converge} \RETURN $\bm x_t$
 \end{algorithmic}
\end{algorithm}

 \section{ADMM Projection algorithm}
 ADMM is widely chosen for its capability of decomposing coupled variables/constraints, which is exactly the case in our projection problem. Before applying ADMM, we transform~\eqref{eq:sglp} into an equivalent form as follows:
 \begin{equation*}
   \label{eq:sgpp:admm}
   \begin{aligned}
   &\underset{x}{\text{minimize}} && \frac{1}{2}\|\bm x - \bm v\|^2_2 \\
     &\text{subject to} && \|\bm u\|_1 \le s_1 \\
    &                   && \|\bm w\|_G \le s_2 \\
   &                    && \bm u = \bm x, \bm w = \bm x.
   \end{aligned}
 \end{equation*}
 The augmented Lagrangian is:
 \begin{equation*}
 \begin{aligned}
 \mathcal{L}(\bm x, \bm \lambda, \bm \eta) = \frac{1}{2}\|\bm x -\bm v\|^2_2 &+ \bm \lambda^T(\bm u- \bm x) + \bm \eta^T(\bm w - \bm x) \\
 &+ \frac{\rho}{2}(\|\bm u - \bm x\|_2^2 + \|\bm w - \bm x\|_2^2).
 \end{aligned}
 \end{equation*}
 Utilize the scaled form~\cite{boyd2011distributed}, i.e., let $\bm \lambda =
 \frac{\bm \lambda}{\rho}$, $\bm \eta = \frac{\bm \eta}{\rho}$, we can obtain an
 equivalent augmented Lagrangian:
 \begin{equation*}
 \begin{aligned}
 \mathcal{L}(\bm x, \bm \lambda, \bm \eta) = \frac{1}{2}\|\bm x - \bm v\|^2_2 &+ \frac{\rho}{2}(\|\bm x - \bm u - \bm \lambda\|_2^2 + \|\bm x - \bm w - \bm \eta\|_2^2) \\
 &- \frac{\rho}{2}(\|\bm \lambda\|_2^2+\|\bm \eta\|_2^2).
 \end{aligned}
 \end{equation*}

 Now we calculate the optimal $\bm x$, $\bm \lambda$ and $\bm \eta$ through
 alternating minimization. For fixed $\bm u$ and $\bm w$, the optimal $\bm x$
 possesses a closed-form solution:
 \[
 \bm x = \frac{1}{1+2\rho}\left(\bm v + \rho(\bm u + \bm \lambda + \bm w + \bm \eta)\right).
 \]

 For fixed $\bm x$ and $\bm u$, finding the optimal $\bm w$ is a group lasso
 projection:
 \begin{equation}
 \label{eq:admm:groupLassoProjection}
  \begin{aligned}
     &\underset{\bm w}{\text{minimize}} && \frac{1}{2}\|\bm w - (\bm x - \bm \eta)\|^2_2 \\
     &\text{subject to} && \|\bm w\|_G\le s_2
   \end{aligned}
 \end{equation}

 For fixed $\bm x$ and $\bm w$, finding the optimal $\bm u$ amounts to solve an
 $L_1$-ball projection:
 \begin{equation}
   \label{eq:admm:l1Projection}
   \begin{aligned}
     &\underset{\bm u}{\text{minimize}} && \frac{1}{2}\|\bm u - (\bm x - \bm \lambda)\|_2^2\\
     &\text{subject to}   && \|\bm u\|_1 \le s_1.
   \end{aligned}
 \end{equation}
The update of multipliers is standard as follows:
\begin{equation}
\begin{aligned}
\bm \lambda &= \bm \lambda + \bm u - \bm x\\
\bm \eta &= \bm \eta + \bm w - \bm x
\end{aligned}
\end{equation}
Algorithm~\ref{alg:admm} summarizes the above procedure. Note that, the value of
the penalty term $\rho$ is fixed in Algorithm~\ref{alg:admm}. However, in our
implementation, we increase $\rho$ whenever necessary to obtain faster
convergence.
\begin{algorithm}[h]
    \caption{ADMM~\cite{boyd2011distributed} for~\eqref{eq:sglp}}
    \label{alg:admm}
 \begin{algorithmic}[1]
    \REQUIRE $\bm v$, $s_1$, $s_2$
    \ENSURE \texttt{an optimal solution} $x$ \texttt{to}~\eqref{eq:sglp}
    \STATE \textbf{Initialize}: $\bm x_0$, $\bm u_0$, $\bm w_0$, $\bm \lambda_0$, $\bm \eta_0$, $t=0$, $\rho > 0$
    \REPEAT
    \STATE $t=t+1$
    \STATE $\bm x_{t}= \frac{1}{1+2\rho}\left(\bm v + \rho(\bm u_{t-1} + \bm \lambda_{t-1} + \bm w_{t-1} + \bm \eta_{t-1})\right)$
    \STATE $\bm w_{t} = \mcal{P}_G^{s_2}(\bm x_t - \bm \eta_{t-1})$
    \STATE $\bm u_{t} = \mcal{P}_1^{s_1}(\bm x_t - \bm \lambda_{t-1})$
    \STATE $\bm \lambda_t = \bm \lambda_{t-1} + \bm u_t - \bm x_t$, $\bm \eta_t = \bm \eta_{t-1} + \bm w_t - \bm x_t$.
    \UNTIL{\texttt{Converge}}
    \RETURN $\bm x_t$
 \end{algorithmic}
\end{algorithm}

\section{Dykstra's Algorithm}
Dykstra's algorithm is a general scheme to compute the projection onto
intersections of convex sets. It is carried out by taking Euclidean projections
onto each convex set alternatively in a smart way and is guaranteed to converge
for least squares objective function~\cite{combettes2010proximal}. The details
of applying Dykstra's Algorithm to our projection problem are listed in
Algorithm~\ref{alg:dykstra}.

\begin{algorithm}[h]
    \caption{Dykstra's Algorithm~\cite{combettes2010proximal} for~\eqref{eq:sglp}}
    \label{alg:dykstra}
 \begin{algorithmic}[1]
    \REQUIRE $\bm v$, $s_1$, $s_2$
    \ENSURE \texttt{an optimal solution} $x$ \texttt{to}~\eqref{eq:sglp}
    \STATE \textbf{Initialize}: $\bm x_0 = \bm v$, $\bm p_0 = \bm 0$, $\bm q_0 = \bm 0$, $t = 0$
    \REPEAT
    \STATE $t=t+1$
    \STATE $\bm y_{t-1}= \mcal{P}_G^{s_2}(\bm x_{t-1}+\bm p_{t-1})$
    \STATE $\bm p_t = \bm x_{t-1} + \bm p_{t-1} - \bm y_{t-1}$
    \STATE $\bm x_t = \mcal{P}_1^{s_1}(\bm y_{t-1} + \bm q_{t-1})$
    \STATE $\bm q_{t} = \bm y_{t-1} + \bm q_{t-1} - \bm x_t$
    \UNTIL{\texttt{Converge}}
    \RETURN $\bm x_t$
 \end{algorithmic}
\end{algorithm}

\end{document}